\def\1{\bm{1}}
\newcommand{\train}{\mathcal{D}}
\def\vzero{{\bm{0}}}
\def\vbeta{{\bm{\beta}}}
\def\vb{{\bm{b}}}
\def\vc{{\bm{c}}}
\def\vl{{\bm{l}}}
\def\vx{{\bm{x}}}
\def\vz{{\bm{z}}}
\def\mI{{\bm{I}}}
\def\mW{{\bm{W}}}
\DeclareMathAlphabet{\mathsfit}{\encodingdefault}{\sfdefault}{m}{sl}
\SetMathAlphabet{\mathsfit}{bold}{\encodingdefault}{\sfdefault}{bx}{n}
\def\gE{{\mathcal{E}}}
\def\gG{{\mathcal{G}}}
\def\gL{{\mathcal{L}}}
\def\gN{{\mathcal{N}}}
\def\gU{{\mathcal{U}}}
\def\gX{{\mathcal{X}}}
\def\sR{{\mathbb{R}}}
\newcommand{\E}{\mathbb{E}}
\newcommand{\R}{\mathbb{R}}
\DeclareMathOperator*{\argmax}{arg\,max}
\newcommand{\D}{\mathcal{D}}
\newtheorem{theorem}{Theorem}
\theoremstyle{definition}
\title{Simplifying Bayesian Optimization Via \\ In-Context Direct Optimum Sampling}
\newcommand{\authorinfo}{
Gustavo Sutter$^{1,2}$ \quad Mohammed Abdulrahman$^{1,2}$ \quad Hao Wang$^{1}$ \vspace{0.5em}\\ \bf  Sriram Ganapathi Subramanian$^{2}$ \quad Marc St-Aubin$^{3}$ \quad Sharon O'Sullivan$^{3}$ \quad Lawrence Wan$^{3}$\vspace{0.5em}\\ \bf Luis Ricardez-Sandoval$^{1}$ \quad Pascal Poupart$^{1,2}$ \quad Agustinus Kristiadi$^{2}$\vspace{0.5em}\\
	$^1$ University of Waterloo\\
	$^2$ Vector Institute\\
	$^3$ BMO, Technology \& Operations\\
    \vspace{-2em}
}
\author{\authorinfo}
\begin{document}

\maketitle

\begin{abstract}
  The optimization of expensive black-box functions is ubiquitous in science and engineering.
  A common solution to this problem is Bayesian optimization (BO), which is generally comprised of two components: (i) a surrogate model and (ii) an acquisition function,
  which generally require expensive re-training and optimization steps at each iteration, respectively.
  Although recent work enabled in-context surrogate models that do not require re-training, virtually all existing BO methods still require acquisition function maximization to select the next observation, which introduces many knobs to tune, such as Monte Carlo samplers and multi-start optimizers.
  In this work, we propose a completely in-context, zero-shot solution for BO that does not require surrogate fitting or acquisition function optimization.
  This is done by using a pre-trained deep generative model to directly sample from the posterior over the optimum point.
  We show that this process is equivalent to Thompson sampling and demonstrate the capabilities and cost-effectiveness of our foundation model on a suite of real-world benchmarks.
  We achieve an efficiency gain of more than \( 35\times \) in terms of wall-clock time when compared with Gaussian process-based BO, enabling efficient parallel and distributed BO, e.g., for high-throughput optimization.
\end{abstract}

\section{Introduction}
\label{sec:intro}

\begin{wrapfigure}[15]{r}{0.4\textwidth}
    \centering
    \vspace{-1.75em}
    \includegraphics[width=\linewidth]{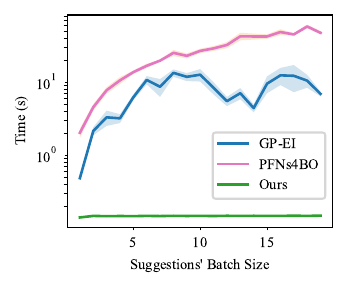}
    \vspace{-1.75em}
    \caption{
        Performing BO in-context with our method enables batched BO with a large batch size (almost) for free.
    }
    \label{fig:speed}
\end{wrapfigure}

Many problems in chemistry~\citep{griffiths2020constrained,greenaway2023alchemist}, biology~\citep{romero2013navigating,ruberg2023bayesdrug} and computer science~\citep{snoek2012boml,feurer2022auto} rely on optimizing an expensive black-box function.
Often, input domains are prohibitively large, and objective function evaluation needs laboratory experiments or computation-intensive simulation \citep{self-driving-lab2024tom}.
This necessitates specific black-box optimization methods that are data-efficient.
Ideally, those methods should be able to find the optimal value by leveraging pre-existing data and smart exploration strategies that query the objective function as little as possible.

An effective method for this type of problem is Bayesian optimization \citep[BO;][]{Mockus1975,garnett_bayesoptbook_2023}.
This family of methods works by sequentially recommending the next point \(\vx_{t+1}\) in a bid to maximize the target black-box function in as few steps as possible, under the guidance of a probabilistic surrogate model \(p(f \mid \D_t)\) trained on previous data points \(\D_t\).
Gaussian processes \citep[GPs;][]{Rasmussen_Williams_gp_2006} are the \emph{de facto} surrogate model for BO due to their tractable posterior inference, being supported by popular libraries and frameworks \citep{Balandat2019BoTorchPB}.
However, traditional GP scales cubically with the number of training points, assumes that targets are jointly Gaussian, and often uses stationary kernels.
Alternatively, one can use Bayesian neural networks, which excel in representing high-dimensional data and deal well with non-stationary objectives \citep{kristiadi_promises_2023,kristiadi_sober_2024}.
However, it is unclear how to pick the most suitable surrogate model for the problem at hand \citep{li_study_2023}.
Moreover, expensive re-training must be done at each BO step, incurring an additional computation overhead and more hyperparameters to tune.

\begin{figure*}[t]
  \centering
  \includegraphics[width=.9\textwidth]{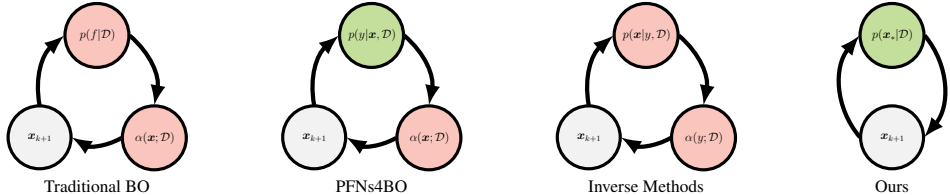}
\
  \caption{
    Optimization loops of different black-box optimizers: traditional BO, PFNs4BO~\citep{muller_pfns4bo_2023}, inverse methods such as Diff-BBO~\citep{wu_diff-bbo_2024} and DiBO~\citep{mittal2025amortizedincontextbayesianposterior}, and FIBO (ours).
      {\color{Salmon}Red nodes} represent stages that require model fitting or optimization (undesirable), while {\color{LimeGreen} green nodes} represent in-context learning operations (desirable).
    Fewer nodes are desirable since it implies fewer moving parts.
  }
  
  \label{fig:loop}
\end{figure*}

To alleviate these issues, recent work proposed in-context learning for surrogate modelling \citep{muller_pfns4bo_2023}.
These works reduce the cost of BO by using a surrogate that does not need to be re-trained as new data points are collected.
Instead, a foundation model is pretrained on a large number of functions coming from a predetermined prior distribution.
This strategy of approximate inference allows for a flexible prior without requiring traditionally expensive Bayesian inference methods to compute the approximate posterior.
An alternative approach is inverse models that generate input points given a target function value \citep{kumarlevine2020model}.
Recent work used conditional diffusion models to tackle the problem, leveraging their power to model complex distributions over high-dimensional spaces~\citep{krishnamoorthy_diffusion_2023,wu_diff-bbo_2024}.
Inverse methods, however, may require datasets with tens of thousands of data points to fit the generative model used in the objective optimization loop.

Nevertheless, previous work only aims to solve the surrogate-modeling question that we have raised---the other, equally important parts of  BO have so far been neglected.
Indeed, in addition to surrogate modeling, one must also pick a suitable acquisition function \(\alpha(\,\cdot\, \mid \D): \mathcal{X} \to \R\) that depends on the posterior, such as the expected improvement \citep{Jones1998EfficientGO} or upper confidence bound \citep{Auer2003UsingCB} functions, and optimize it to obtain the proposed point \(\vx_{t+1} = \argmax_{\vx \in \mathcal{X}} \alpha(\vx \mid \D_t)\). 
This optimization process is known to be hard and/or expensive \citep{ament_unexpected_2024}.
% , both in the continuous \citep{ament_unexpected_2024} and discrete \citep{kristiadi_sober_2024} cases.

In this work, we aim to simplify Bayesian optimization by directly sampling from the posterior \(p(\vx_* \mid \D_t)\) of the \emph{optimum} \(\vx_*\) in an in-context way.
That is, we skip \emph{both} the surrogate modeling \emph{and} acquisition function maximization, reducing the number of moving parts in BO to its minimum (\cref{fig:loop}).
To this end, we pretrain a generative model on the pairs \((\D, \vx_*)\) of a context \(\D := \{ (\vx_i, f(\vx_i)) \}_{i}\) and the optimal point \(\vx_* = \argmax_{x \in \mathcal{X}} f(x)\) of a given function \(f \sim p(f)\) sampled from some prior.
Due to its in-context, foundational nature, our method does not require practitioners to train a particular surrogate model.
Meanwhile, due to its direct nature, our method does not require expensive and brittle acquisition function optimization. 
This results in substantial speed-ups, especially for parallel suggestions, as illustrated in Figure~\ref{fig:speed}. 
We refer to our method as \emph{\textbf{F}ully \textbf{I}n-Context \textbf{B}ayesian \textbf{O}ptimization}, or \emph{\textbf{FIBO}} for short.

FIBO simplifies the modeling and the inner optimization part of BO.
Nevertheless, although there is no explicit acquisition function maximization in FIBO, we prove that it is equivalent to Thompson sampling \citep{thompson1933sampling}.
Thus, FIBO is indeed a principled approach to BO.
Finally, we demonstrate the performance of FIBO in standard synthetic test functions and real-world benchmarks.
Despite being more than \( 10 \times \) faster than traditional BO approaches, it matches their performance consistently.

\section{Preliminaries}
\label{sec:bg}

Here, we discuss the necessary background to derive our method: (i) Bayesian optimization, (ii) in-context probabilistic surrogates, and (iii) deep generative models.

\subsection{Bayesian Optimization}

Let $f : \gX \rightarrow \sR$ denote an unknown objective function on a space \(\gX \subset \sR^d\).
Our goal is to find an optimal point $\vx^*  \in \argmax_{\vx \in \gX} f(\vx)$.
We are interested in the case where $f$ is expensive to compute and the input domain cannot be explored exhaustively.
We are limited to querying the objective function a few times and only have access to its output value \(f(\vx)\).

Bayesian optimization \citep[BO;][]{Mockus1975,garnett_bayesoptbook_2023} addresses this problem by using two components: a surrogate model and an acquisition function.
The surrogate consists of a probabilistic model $p(f|\train)$ that captures our posterior beliefs about the objective function \(f\) given our current dataset \(\D\).
Meanwhile, the acquisition function $\alpha : \gX \rightarrow \sR$ scores points in the input domain representing our preference over the next locations to be queried. At each step, the next point is selected via $\argmax_{\vx \in \gX} \alpha(\vx;\train)$.

One example of an acquisition function is Thompson sampling \citep[TS;][]{thompson1933sampling}, which is defined as $\alpha_\text{TS}(\vx;\train) = \smash{\hat{f}}(x)$, where $\smash{\hat{f}} \sim p(f|\train)$.
It is equivalent to sampling the next point from the posterior distribution over the optimal point, that is, $\vx_{t+1} \sim p(\vx_*|\train)$ \citep{shahriari_taking_2016}.
The inherent randomness of TS and the fact that it faithfully follows the posterior belief ensures good exploration-exploitation balance, making it widely used in practice \citep{kristiadi2024asyncBO}.

There are applications for which it is beneficial or even necessary to run multiple evaluations of the objective function in parallel.
In this high throughput setting, we are interested in simultaneously suggesting a collection \((\vx_1,...,\vx_q)\) of \(q\) points to be evaluated.
Evaluating the joint acquisition function over the entire batch poses complex computational and optimization problems.
Popular solutions include sequential simulation~\citep{Ginsbourger2010}, in which the points are optimized greedily through \(q\) steps of standard BO, and MC approaches that can be applied when the posterior distribution is Gaussian~\citep{Wilson2018Maximizing}.
Importantly, Thompson sampling allows trivial batch construction by simply sampling \(q\) points from the posterior~\citep{hernandez-lobato_parallel_2017}.

\subsection{In-Context Learning}

In-context learning refers to algorithms that learn from a few examples provided at test time, without updating any parameter value \citep{Xie2021AnEO}.
In a supervised learning setting it refers to estimating $p(f(\vx)| \{\vx_i,f(\vx_i)\}_{i=1}^k, \vx_\text{query})$---the probability of a function \(f\) on a query point \(\vx_\text{query}\) given \(k\) in-context examples \(\{\vx_i,f(\vx_i)\}_{i=1}^k\).
It can also be formulated for unsupervised tasks, such as generation, as sampling from $p(\vx_\text{query}|\vx_1,...,\vx_k)$ such that $\vx_\text{query}$ comes from the same distribution as the context examples.

Connecting to the familiar application of in-context learning in large language models, $\vx_k$ represents the $k$-th input example (e.g., a text prompt), and $f(\vx_k)$ is the corresponding output (e.g., a predicted continuation) in supervised tasks like translation or question-answering~\citep{brown2020gpt3}.

\subsection{Deep Generative Model}

Deep generative models aim to approximate a non-trivial and high-dimensional distribution \(p(\vx)\) by only having access to its samples. 
In this work, we are interested in conditional generative models of the form \(p_\theta(\vx|\vc)\) that make use of a neural network to generate samples conditioned on a context vector \(\vc \in \sR^c\).
This is the setting used in many popular applications such as text-to-image~\citep{Ramesh2021ZeroShotTG}, image-to-image~\citep{xiao2024omnigen} or even text-to-molecule generation~\citep{gong2024textguidedmoleculegenerationdiffusion}.
In addition to the context vector, the model usually receives as input a latent vector \(\vz\) that comes from a base distribution \(p_\vz(\vz)\) that is easy to sample from (e.g., multivariate Gaussian).

One family of such models are normalizing flows~\cite{Papamakarios2019NormalizingFF}, which transform a sample from the base distribution into a point from the complex data distribution through an invertible and differentiable transformation \(T_\theta(\vz;\vc)\) parametrized by a neural network. 
The model is trained to maximize the data likelihood using the change-of-variables formula: \(p(\vx) = p_\vz(T^{-1}_\theta(\vx;\vc))|\det \smash{J_{T^{-1}_\theta}}(\vx)|\), where \(\smash{J_{T^{-1}_\theta}}\) denotes the Jacobian matrix of the inverse transformation. 
Many efforts have been directed towards designing powerful transformations that can be efficiently inverted and have Jacobian determinants that are simple to compute.

\section{Fully In-Context Bayesian Optimization (FIBO)}
\label{sec:method}

Our goal is to develop a method that is able to perform Bayesian optimization completely in-context.
That is, we aim to directly suggest the input point of the next observation with no surrogate fitting or acquisition function maximization.
In order to achieve this goal, we propose FIBO, which amounts to a pretrained generative model that works across different objective functions during test time.

\subsection{In-Context Thompson Sampling}

Recall that Thompson sampling is characterized by sampling from a posterior distribution over a function \(f\)'s optimum point \(p(\vx_* \vert \D)\) after observing data \(\train\).
FIBO amounts to learning such a distribution through a model $p_\theta(\vx_* \vert \train)$ parametrized by \(\theta\), which is composed of two parts: an encoder and a generative head.
The encoder \(\gE_{\theta_\text{enc}}:2^{\gX \times \sR} \rightarrow \sR^c\) accepts a set of data points of variable cardinality that is summarized into a fixed-dimension context vector \(\vc = \gE_{\theta_\text{enc}}(\train)\). 
This context vector is passed to the generative head (a normalizing flow) \(\gG_{\theta_\text{gen}}: \sR^d \times \sR^c \rightarrow \gX\) alongside a random vector $\vz \in \sR^d$ to produce a sample from the posterior distribution over the optimal point \(\hat{\vx}_* = \gG_{\theta_\text{gen}}(\vz;\vc)\).
Combining both components we can train an end-to-end model parametrized by \(\theta = (\theta_\text{enc}, \theta_\text{gen})\) minimizing the negative log-likelihood loss function:
\begin{equation}
  \label{eq:loss}
  \gL(\theta) = -\E_{\vx_*,\train \sim p(f, \vx_*,\train)}[\log p_\theta(\vx_*|\train)] .
\end{equation}

Given a context dataset \(\train\), we can then use the pretrained \(p_\theta(\vx_* \vert \train)\) to directly sample the optimum point.
In \cref{subsec:analysis} we shall show that this sample comes from the posterior derived from some prior over functions after observing \(\D\).
But first, we clarify where this prior comes from and how to efficiently pretrain this model.

\subsection{Model Pretraining}
\label{sec:pretraining}

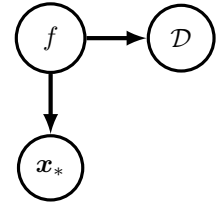
\begin{wrapfigure}[10]{r}{0.3\textwidth}
    \centering
    \vspace{-1.75em}
    \begin{tikzpicture}[scale=1, transform shape, node distance=0.8cm]
\tikzstyle{every node}=[draw=black, circle, very thick, text width=0.5cm, align=center]

\node[] (f) {$f$};
\node[below= of f] (x) {$\vx_*$};
\node[right= of f] (D) {$\train$};

\draw[-latex, ultra thick] (f.south) to (x.north);
\draw[-latex, ultra thick] (f.east) to (D.west);

\end{tikzpicture}
    \caption{Factorization of the data-generating distribution.}
    \label{fig:dag}
\end{wrapfigure}

To generate the data for pretraining FIBO, we need to sample pairs \((\vx_*, \train)\)'s from the joint distribution \(p(f, \vx_*, \train)\).
Consider a factorization of such a distribution to the graphical model in \cref{fig:dag}:
\begin{equation}
  \label{eq:joint}
  p(f,\vx_*,\train) = p(\vx_*|f)p(\train|f)p(f)
\end{equation}
This factorization makes sense in BO since, given a function \(f\), the \emph{true} optimum \(\vx_*\) does not depend on the historical data \(\D\).
Thus, it allows us to efficiently sample \((\vx_*,\train) \) from \(p(f, \vx_*, \D)\) because \(\vx_* \perp\!\!\!\perp \train \mid f\), i.e., we can sample the function first and then separately sample the optimal point and the dataset. 
Thus, we can perform on-the-fly data augmentation by subsampling our dataset while keeping the optimal point of the corresponding function fixed.

Notice \(f\) is independent of \(\train\).
This implies that we only need a prior over functions \(p(f)\) for the data sampling process, as in \citet{muller_transformers_2023}. 
The final data-gathering process for pretraining $p_\theta(\vx_*|\train)$ is summarized in \cref{alg:data-collection} and full details in Appendix~\ref{apdx:training}.

\begin{figure}[t]
    \centering
    \includegraphics[width=\linewidth]{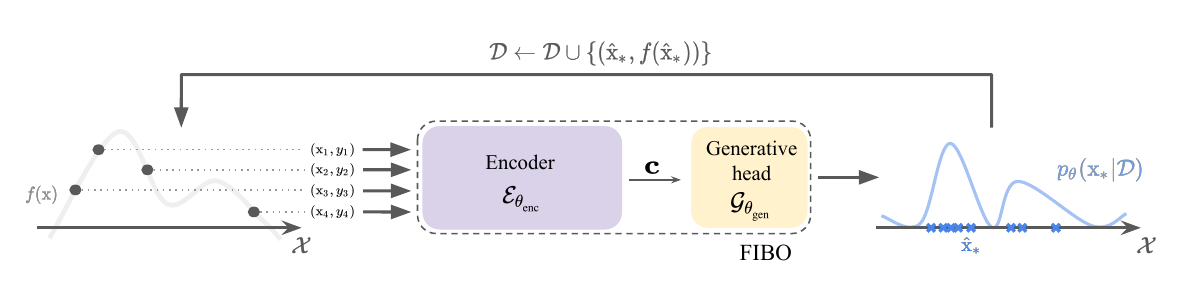}
    \vspace{-1.5em}    
    \caption{
        Illustration of a Bayesian optimization loop using FIBO. 
        The model receives a dataset as input and produces samples from the posterior distribution over the optimal point.
    }
    \label{fig:model}
\end{figure}

\subsection{Analysis}
\label{subsec:analysis}

Importantly, FIBO is an approximation of Thompson sampling under the prior distribution \(p(f)\) used for pretraining. 
The following theorem shows that the proposed method is a principled approach to BO and retains the properties of TS while being more efficient.

\vspace{0.5em}

\begin{theorem}
\label{thm:ts}
    An in-context generative model \(p_\theta(\vx_*|\train)\) trained by minimizing the loss $\gL(\theta)$ in \eqref{eq:loss} using samples $(\vx_*,\train)$ from $p(f, \vx_*,\train)$ defined in \eqref{eq:joint} is an approximation of Thompson sampling with $p(f|\train)$ defined by the chosen prior over function $p(f)$.
\end{theorem}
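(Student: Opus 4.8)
The plan is to prove the statement in two stages: first, identify the minimizer of the population loss $\gL(\theta)$ over an unrestricted family of conditional densities; second, show that this minimizer is exactly the predictive distribution from which Thompson sampling draws under the prior $p(f)$. The word ``approximation'' then accounts for the gap between this idealized minimizer and what a finite-capacity flow trained on finitely many samples actually realizes.

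\textbf{Stage 1: the Bayes-optimal predictor is the true conditional.} Conditioning on $\train$ and applying the tower rule to \eqref{eq:loss},
\begin{align*}
  \gL(\theta) &= \E_{\train}\!\left[ \E_{\vx_* \sim p(\vx_* \mid \train)}\!\left[ -\log p_\theta(\vx_* \mid \train) \right] \right] \\
              &= \E_{\train}\!\left[ H\!\left(p(\cdot \mid \train)\right) + \KL\!\left(p(\cdot \mid \train) \,\big\|\, p_\theta(\cdot \mid \train)\right) \right],
\end{align*}
where the marginal over $(\vx_*, \train)$ is the one induced by \eqref{eq:joint}. The entropy term does not depend on $\theta$, so $\gL$ is minimized precisely when $\KL(p(\cdot \mid \train) \,\|\, p_\theta(\cdot \mid \train)) = 0$ for $p(\train)$-almost every context, i.e., when $p_\theta(\cdot \mid \train) = p(\cdot \mid \train)$. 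This is the standard cross-entropy / proper-scoring-rule argument; the only care needed is that the relevant conditional densities exist, which follows from mild regularity on $p(f)$.

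\textbf{Stage 2: the true conditional is the Thompson-sampling predictive.} Expanding $p(\vx_* \mid \train)$ with the factorization \eqref{eq:joint}, which encodes $\vx_* \perp\!\!\!\perp \train \mid f$,
\[
  p(\vx_* \mid \train) = \frac{\int p(\vx_* \mid f)\, p(\train \mid f)\, p(f)\, df}{\int p(\train \mid f)\, p(f)\, df} = \int p(\vx_* \mid f)\, p(f \mid \train)\, df,
\]
where $p(f \mid \train) \propto p(\train \mid f)\, p(f)$ is exactly the Bayesian posterior over functions under the pretraining prior. Sampling $\vx_*$ from the right-hand side is ancestral: draw $f \sim p(f \mid \train)$, then $\vx_* \sim p(\vx_* \mid f)$; since $p(\vx_* \mid f)$ concentrates on $\argmax_{\vx \in \gX} f(\vx)$, this is precisely one step of Thompson sampling against $p(f \mid \train)$ (cf.\ \cref{sec:bg}). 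Composing the two stages, the global minimizer of $\gL$ reproduces Thompson sampling exactly, and the trained $p_\theta$ matches it up to approximation and optimization error.

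The main obstacle is not a single hard computation but making the idealizations precise enough to justify ``approximation'': (i) realizability --- the normalizing-flow head must be expressive enough that $p(\cdot \mid \train)$ lies (approximately) in its image so the $\KL$ term can be driven near zero; and (ii) the passage from the population loss to the empirical loss that is actually optimized, with its attendant statistical and optimization gaps. A minor technical point is the treatment of $p(\vx_* \mid f)$ when $\argmax f$ is not a singleton: one either assumes $p(f)$ assigns zero mass to such $f$, or reads $p(\vx_* \mid f)$ as uniform over the argmax set, and the equivalence to Thompson sampling is unchanged.
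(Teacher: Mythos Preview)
Your proposal is correct and its core argument (Stage~1) is exactly the paper's proof: rewrite the expected NLL as an expectation over $\train$ of a cross-entropy, split off the $\theta$-independent entropy, and conclude that minimizing $\gL(\theta)$ is minimizing $\E_\train[\KL(p(\vx_*\mid\train)\,\|\,p_\theta(\vx_*\mid\train))]$. Your Stage~2, which explicitly marginalizes over $f$ to identify $p(\vx_*\mid\train)=\int p(\vx_*\mid f)\,p(f\mid\train)\,df$ as the Thompson-sampling predictive, and your discussion of what ``approximation'' covers (realizability, population-vs-empirical gap, non-unique argmax), go beyond what the paper's proof spells out---it simply asserts that $p(\vx_*\mid\train)$ is ``the true Thompson sampling distribution''---so your write-up is, if anything, more complete.
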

%
% \vspace{-1em}
%
\begin{proof}[Proof Sketch]
    The result follows from rewriting the expected negative log-likelihood as an expectation over datasets of the cross entropy between the \(p_\theta(\vx_*|\train)\) and \(p(\vx_*|\train)\). Using the relationship between cross entropy and KL divergence we show that we are minimizing the latter between the two distributions. 
    Complete proof in Appendix~\ref{apdx:proof}.
\end{proof}

This result allows us to use Thompson sampling following the principles of in-context learning. In contrast, other surrogate models used in BO, such as GPs or Laplace-approximated BNNs use considerably more computation. 
The former requires fitting and approximate sampling using spectral techniques \cite{Rahimi2007RandomFF}, and the latter also requires fitting and computing the network's Jacobian.

\begin{figure}[t]
    \begin{minipage}{0.475\textwidth}
        \begin{algorithm}[H]
            \caption{Pretraining data generation }
            \label{alg:data-collection}
            \begin{algorithmic}
            \Require Prior \(p(f)\), minimum dataset size $N_\text{min}$, maximum dataset size $N_\text{max}$
            \Ensure Pretraining data \(\D_\text{PT}\)
            \Statex
            \State Initialize pretraining dataset $\train_\text{PT} = \emptyset$
            \Repeat
            \State Sample objective function $f \sim p(f)$
            \State $\vx_* = \mathtt{gradient\_ascent}(f)$ 
            \State $n \gets \gU(N_\text{min},N_\text{max})$
            \State $X = (\vx_1,...,\vx_n), \vx_i \sim \gU(\gX)$
            \State $\train = \{(\vx_i,f(\vx_i)) : \vx_i \in X\}$
            \State $\train_\text{PT} = \train_\text{PT} \cup \{(\vx_*,\train)\}$
            \Until{Desired amount of data is collected}
            \end{algorithmic}
        \end{algorithm}
    \end{minipage}
    \hfill
    \begin{minipage}{0.475\textwidth}
        \begin{algorithm}[H]
          \caption{BO loop with FIBO}
          \label{alg:bo-loop}
          \begin{algorithmic}
            \Require Model $q_\theta$, initial data $\train_0 = \{(\vx_1, y_1),...,(\vx_k, y_k)\}$, objective $f$, batch size $q$, number of iterations $T$

            \Statex 
            \For{$t \gets 0$ to $T-1$}
            \State $\vc \gets  \gE_{\theta_\text{enc}}(\train_t)$
            \For{$i \gets 1$ to $q$}
            \State $\vz_i \gets \gN(\mathbf{0},\mathbf{I})$
            \State $\vx_{i} \gets \gG_{\theta_\text{gen}}(\vz_i;\vc)$
            \State $y_{i} \gets f(\vx_{i})$
            \EndFor
            \State $\train_{t+1} = \train_t \cup \{(\vx_{i},y_{i})\}_{i=1}^{q}$
            \EndFor
          \end{algorithmic}
        \end{algorithm}
    \end{minipage}
\end{figure}

\subsection{Using FIBO for Bayesian Optimization}

Given a model pretrained as described above, we are now able to perform BO completely in context.
This is done by sampling from the generative model at each step, as described in \cref{alg:bo-loop}.

\begin{wrapfigure}[13]{r}{0.35\textwidth}
    \centering
    \vspace{-1.5em}
    \includegraphics[width=\linewidth]{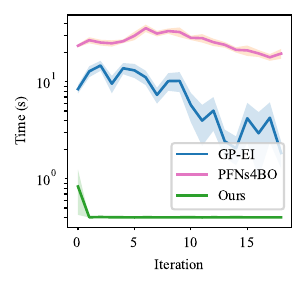}
    \vspace{-2em}
    \caption{Time comparison on \(4\)D Ackley function with \(q=10\)}
    \label{fig:time}
\end{wrapfigure}

Since the model is pretrained, no surrogate fitting is performed at test time.
In addition, because we directly sample from the posterior over the optimal point there is no explicit acquisition function maximization. 
Therefore, FIBO bypasses the iterative optimization steps present in both stages of Bayesian optimization---instead performing a single forward pass through the deep generative model.

As a consequence, FIBO is significantly faster than the alternatives. Figure \ref{fig:time} shows a comparison between FIBO and a Gaussian Process in terms of the time taken to generate a suggestion of a batch with 10 points on the 4D Ackley function. Even though the GP implementation makes use of modern GPU acceleration and efficient algorithms \citep{Balandat2019BoTorchPB}, FIBO is more than one order of magnitude faster.
\section{Related Work}
\label{sec:related}

\textbf{In-context surrogate models} \quad
Previous works explored alternatives to avoid re-training the surrogate model at each step of the BO loop.
In general, solutions to this problem follow the Neural Processes~\cite{garnelo_conditional_2018} framework, modelling the predictive distribution for a test point given a set of labelled points.
PFNs4BO~\citep{muller_pfns4bo_2023} makes use of a Prior-Fitted Network \citep[PFN;][]{muller_transformers_2023} --- a pretrained Transformer that outputs continuous distributions using a binned representation  --- as surrogate. 
The model is pretraining by sampling functions from a predetermined prior distribution.
Notice that PFNs4BO only addresses the surrogate model. It still requires acquisition function maximization at every step of the loop.
Furthermore, because their predictive posterior takes the form of a binned distribution it cannot enjoy the benefits of batch suggestions implemented with efficient MC approximations.
Thus relying on slow sequential simulation techniques.

\textbf{Generative models for black-box optimization} \quad
Inverse approaches for black-box optimization learn the mapping from function values to inputs in the domain of the objective function. While early work was based on generative adversarial networks~\citep{kumarlevine2020model}, recent developments have successfully used the higher generation power of diffusion models.
\citet{krishnamoorthy_diffusion_2023} proposed a method for offline black-box optimization that trains a conditional diffusion model on a given dataset.
During evaluation, the model generates samples conditioned on the higher function value observed in the training dataset.
Diff-BBO~\citep{wu_diff-bbo_2024} applies an analogous technique in an online fashion, re-fitting an ensemble of diffusion models after collecting every new batch of data.
More recently, \citet{yun2025posteriorinferencediffusionmodels} proposed DiBO, which at each step trains an unconditional diffusion model and an ensemble of regression models that are used together to sample the next points using local search.
Importantly, these methods require extensive test-time compute in order to fit the multiple models at each step of the objective function optimization loop.
For this reason, this family of models are even more computationally expensive than traditional BO and in-context surrogate modelling, thus it is out of the scope of our work.

\textbf{Generative Models for Approximate Bayesian Inference} \quad
Bayesian statistics is interested in inferring the posterior distribution over the model parameters, which is often complex and high-dimensional. 
Therefore, researchers have investigated ways in which deep generative models can be used as tools for approximate Bayesian inference.
BayesFlow~\citep{bayesflow_2020_original} introduces a framework that makes use of DeepSets~\citep{Zaheer2017Deepsets} and normalizing flows to learn the posterior distribution using data-parameters pairs generated via simulation from a predetermined model. The trained model is then used to perform amortized Bayesian inference via sampling.
This line of work was extended by~\citep{mittal2025amortizedincontextbayesianposterior}, investigating the effect of different encoders, such as GRU~\citep{cho-etal-2014-learning} and Transformer~\citep{Vaswani2017AttentionIA}, and comparing objective functions based on the forward and backward KL divergences.
\citet{reuter2025transformerslearnbayesianinference} explored a more general problem in which there is no predetermined model, instead they proposed a model pretrained on a large enough dataset that is capable of approximating a large class of distributions. 
Their model, which consists of a modified Transformer encoder and continuous normalizing flows, shows state-of-the-art performance for generalized linear models and latent factor models.
\section{Experiments}
\label{sec:experiments}

\textbf{Architecture} \quad
For the encoder, we use the Transformer~\citep{Vaswani2017AttentionIA} model from the PFN pretrained on the BNN prior released by the authors in PFNs4BO~\citep{muller_pfns4bo_2023}.
The context vector is extracted by taking the average of the Transformer's output vector of each data point. 
We observe that starting from a pretrained model helps with performance even if the prior for which we are finetuning is not the same as the one used for pretraining.
The generative model used is an autoregressive neural spline flow~\citep{Durkan2019Spline} from the normflows library~\citep{Stimper2023}. Implementation details are provided in Appendix~\ref{apdx:implementation}.

\textbf{Pretraining Prior} \quad
We generate the data to pretrain our model using a GP prior as described in \citet{hernandez2014entropysearch}. 
Using the Fourier dual representation of the RBF kernel~\citep{Rahimi2007RandomFF} we are able to sample parametric approximations of the GP.
This allows us to sample \((\vx_*, \train)\) pairs efficiently using standard linear algebra packages.
More specifically, once we sample an instance of a \(f \sim \text{GP}\) we use L-BFGS-B~\cite{Byrd1995lbfgs} with multiple restarts to obtain \(\vx_*\) and sample the dataset \(\train\) uniformly.
In addition, we perform rejection sampling to ensure that the marginal distribution over the optimal point is uniform across the input domain.

\textbf{Baselines} \quad
We compare our method against a Gaussian Process with Matern-$\nicefrac{5}{2}$ and Log Expected Improvement~\citep{ament_unexpected_2024} (\textsc{GP-EI}), linearized Laplace approximation~\citep{kristiadi_promises_2023} (\textsc{LLA}), \textsc{PFNs4BO}~\citep{muller_pfns4bo_2023} with HEBO prior, and random search (\textsc{RS}). For GP-EI and LLA we use MC acquisition functions~\citep{wilson_maximizing_2018} to obtain suggestion batches. For PFNs4BO, we adapt their implementation to perform batch selection via sequential simulation using the expected value under the model~\citep{Ginsbourger2010}.
The LLA baseline is highly memory intensive in the batch setting, for this reason, we only report their results up to the batch size that fits in GPU memory.
Details are provided in Appendix~\ref{apdx:implementation}.

\textbf{Evaluation} \quad
For all experiments we use the \textsc{GAP} measure~\citep{Jiang2020Binoculars} to evaluate the performance: $\text{GAP}~=~(y_i - y_0)/(y_* - y_0)$. Whenever the objective function does not have a known optimal point we take it to be the best result across all runs and methods. We also compare the wall-clock time (in seconds) taken to propose the next points at each iteration. For all tasks we perform a total of 200 function evaluations, using batch sizes \(q \in \{10, 20, 50\}\) and setting the number of initial points to the corresponding batch size.

\subsection{Well-Specified Functions}

\begin{figure}[t]
    \centering
    \includegraphics[width=\linewidth]{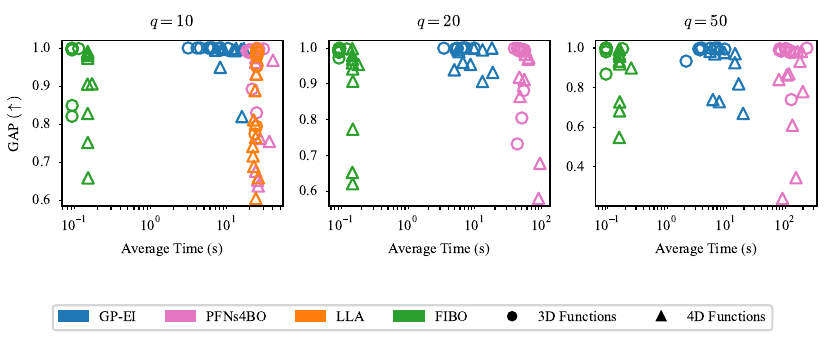}
    \caption{
    Comparison of Bayesian optimization methods on function sampled from FIBO's pretraining prior across different batch sizes \(q \in \{10, 20, 50\}\).
    Each subplot shows the final GAP (higher is better) versus the average wall-clock time (log scale) per run.
    }
    \label{fig:func-prior}
\end{figure}

We start by considering functions that come from the same distribution used to train FIBO. 
To this end, we sample 10 functions from the prior for each dimension. 
All functions are defined on the unit hypercube of their corresponding dimension. 

The results presented in Figure~\ref{fig:func-prior} show that FIBO is up to two and three orders of magnitude faster than GP and PFNs4BO, respectively, as the batch size increases. 
Importantly, the gains in speed do not interfere with the performance in optimization.
Across all runs FIBO shows similar performance in terms of GAP when compared to PFNs4BO, and gets closer to the GP as the batch size increases.
Importantly, given that the functions in this set of experiments come from a GP prior, it is expected that the GP would have the best performance, considering the surrogate is re-fit at each step.

\subsection{Synthetic Functions}

\begin{figure}[t]
    \centering
    \includegraphics[width=\linewidth]{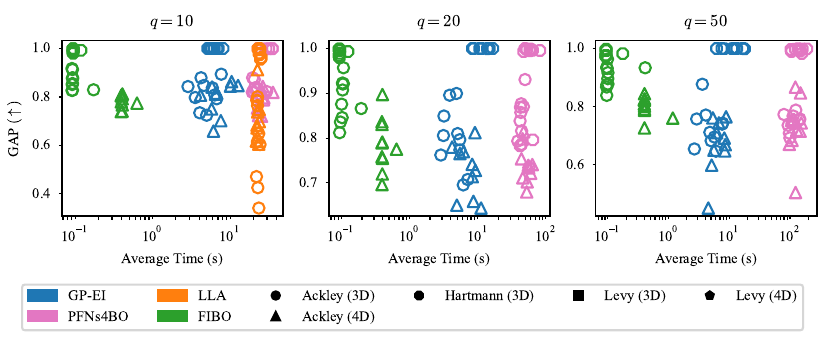}
    \caption{
    Comparison of Bayesian optimization methods on standard synthetic functions across different batch sizes \(q \in \{10, 20, 50\}\).
    Each subplot shows the final GAP (higher is better) versus the average wall-clock time (log scale) per run.
    }
    \label{fig:synthetic}
\end{figure}

We also evaluate the methods on a suite of synthetic functions commonly used in Bayesian Optimization: Ackley function on \([-32.768, 32.768]^{d} \subset \R^{d}\), Levy function on \([-10, 10]^{d} \subset \R^{d}\), and Rosenbrock function on \([-5, 10]^{d} \subset \R^{d}\), taking \(d \in \{3,4\}\) for all three functions.
We also use the Hartmann function on \([0, 1]^{3} \subset \R^{3}\).

The results are presented in Figure \ref{fig:synthetic}. Across all methods, there is higher variance in the final GAP, with harder tasks, such as the Ackley function showing lower scores.
Once more it is possible to see that FIBO is exceptionally faster than the baselines while preserving the optimization quality.
In fact, the aggregated results in Table~\ref{tab:results} show that on average FIBO is similar to both GP-EI and PFNs4BO in terms of GAP.

\subsection{Real-World Chemistry Tasks}

\begin{figure}[b]
    \centering
    \includegraphics[width=\linewidth]{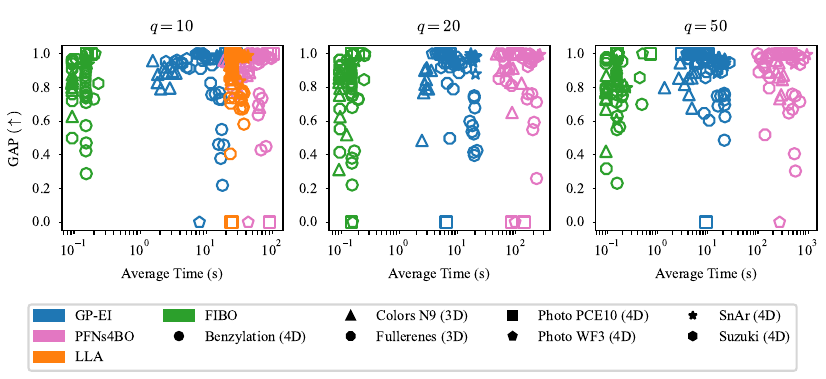}
    \caption{
    Comparison of Bayesian optimization methods on various chemistry tasks from Olympus toolkit across different batch sizes \(q \in \{10, 20, 50\}\).
    Each subplot shows the final GAP (higher is better) versus the average wall-clock time (log scale) per run.
    }
    \label{fig:olympus}
\end{figure}

We conduct experiments on a wide variety of chemistry benchmarks for which an oracle function is provided by the Olympus toolkit~\cite{hase_olympus_2021}. 
All of the used tasks have continuous input domains and vary in dimension. Complete information about the tasks is provided in Appendix~\ref{apdx:olympus}.

We present the results in Figure \ref{fig:olympus}. The overall results are on par with the previous experiments, showing FIBO faster than the other baselines with little to no drop in GAP performance.
In addition, this set of experiments shows the clear advantage of the in-context nature of FIBO.
While GP-EI and PFNs4B0 are slower than other tasks with the same batch size due to a harder inner optimization loop (i.e., maximization of the acquisition function), FIBO's execution time stays the same.
As presented in Table~\ref{tab:results}, this results in FIBO being up to \(100 \times\) faster than GP-EI and up \(1000 \times\) faster than PFNs4BO with no significant drop in GAP performance.

In addition, we perform hypothesis testing to assess if the difference between the methods is statistically significant.
We perform the tests in a pairwise fashion for each batch size \(q \in \{10, 20, 50\}\) considering GAP and wall-clock time separately and together.
For the unidimensional tests we use Wilcoxon rank-sum test and for the bi-dimensional analysis we use permutation test of the groups means.
As the p values presented in Table~\ref{tab:pvalues} there is statistical significancy between FIBO and the baselines when considering wall-clock time alone and both wall-clock time and final GAP together.
Importantly, the statistical analysis shows no significant difference between the final GAP obtained by FIBO and all other baselines.

\begin{table}[t]
    \centering
    \caption{
    Aggregated results of the Bayesian optimization methods across all runs and tasks for all task sets and batch sizes \(q \in \{10, 20, 50\}\).
    The best final GAP (higher is better) and wall-clock time (lower is better) per group is highlighted taking into consideration the standard error.
    }
    \resizebox{\textwidth}{!}{\begin{tabular}{llcccccc}\toprule
        \multirow{2}{*}{Task Set} & \multirow{2}{*}{Method} & \multicolumn{2}{c}{\(q=10\)} & \multicolumn{2}{c}{\(q=20\)} & \multicolumn{2}{c}{\(q=50\)} \\
        \cmidrule(lr){3-4}\cmidrule(lr){5-6}\cmidrule(lr){7-8}
         & & GAP & Time & GAP & Time & GAP & Time \\
        \midrule
        \multirow{4}{*}{Prior} & GP-EI & \(0.99 \pm 0.01\) & \(9.06 \pm 0.88\) & \(0.98 \pm 0.01\) & \(8.28 \pm 0.93\) & \(0.93 \pm 0.02\) & \(7.88 \pm 1.02\)\\ 
        & PFNs4BO & \(0.90 \pm 0.03\) & \(25.05 \pm 1.17\) & \(0.91 \pm 0.03\) & \(55.42 \pm 3.07\) & \(0.85 \pm 0.05\) & \(129.04 \pm 9.45\)\\ 
        & LLA & \(0.85 \pm 0.03\) & \(23.99 \pm 0.24\) & --- & --- & --- & ---\\ 
        & FIBO & \(0.93 \pm 0.02\) & \(0.12 \pm 0.01\) & \(0.93 \pm 0.02\) & \(0.12 \pm 0.01\) & \(0.92 \pm 0.03\) & \(0.13 \pm 0.01\)\\
        \midrule
        \multirow{4}{*}{Synthetic} & GP-EI & \(0.92 \pm 0.01\) & \(8.69 \pm 0.48\) & \(0.90 \pm 0.02\) & \(8.39 \pm 0.44\) & \(0.87 \pm 0.02\) & \(7.76 \pm 0.51\)\\ 
        & PFNs4BO & \(0.92 \pm 0.01\) & \(25.35 \pm 0.61\) & \(0.90 \pm 0.02\) & \(51.81 \pm 1.30\) & \(0.89 \pm 0.02\) & \(126.07 \pm 3.26\)\\ 
        & LLA & \(0.82 \pm 0.03\) & \(23.30 \pm 0.11\) & --- & --- & --- & ---\\ 
        & FIBO & \(0.92 \pm 0.01\) & \(0.17 \pm 0.02\) & \(0.93 \pm 0.01\) & \(0.18 \pm 0.02\) & \(0.93 \pm 0.01\) & \(0.20 \pm 0.03\)\\
        \midrule
        \multirow{4}{*}{Chemistry} & GP-EI & \(0.89 \pm 0.02\) & \(11.58 \pm 0.86\) & \(0.87 \pm 0.02\) & \(9.85 \pm 0.73\) & \(0.88 \pm 0.02\) & \(11.14 \pm 0.81\)\\ 
        & PFNs4BO & \(0.88 \pm 0.03\) & \(53.66 \pm 2.72\) & \(0.87 \pm 0.03\) & \(125.15 \pm 7.54\) & \(0.89 \pm 0.02\) & \(348.05 \pm 23.48\)\\ 
        & LLA & \(0.88 \pm 0.02\) & \(27.76 \pm 0.63\) & --- & --- & --- & ---\\ 
        & FIBO & \(0.87 \pm 0.02\) & \(0.14 \pm 0.00\) & \(0.82 \pm 0.03\) & \(0.14 \pm 0.00\) & \(0.85 \pm 0.02\) & \(0.17 \pm 0.01\)\\
    \bottomrule
    \end{tabular}}

    \label{tab:results}
\end{table}

\begin{wraptable}[18]{r}{0.52\textwidth}

\centering
\vspace{-1.25em}

\caption{P-values for pairwise hypothesis tests between FIBO and baselines across all tasks and runs on the real-world chemistry benchmark.}
\label{tab:pvalues}
\vspace{0.5em}

\footnotesize

\begin{tabular}{llccc}
\toprule
& & GP-EI & PFNs4BO & LLA \\
\midrule
\multirow{3}{*}{\(q=10\)}& Both & \(0.00\) & \(0.00\) & \(0.00\) \\
& Time & \(0.00\) & \(0.00\) & \(0.00\) \\
& GAP & \(0.15\) & \(0.12\) & \(0.32\) \\
\midrule
\multirow{3}{*}{\(q=20\)}& Both & \(0.00\) & \(0.00\) & -- \\
& Time & \(0.00\) & \(0.00\) & -- \\
& GAP & \(0.10\) & \(0.09\) & -- \\
\midrule
\multirow{3}{*}{\(q=50\)}& Both & \(0.00\) & \(0.00\) & -- \\
& Time & \(0.00\) & \(0.00\) & -- \\
& GAP & \(0.26\) & \(0.05\) & -- \\
\bottomrule
\end{tabular}

\end{wraptable}
\section{Conclusion}
\label{sec:conclusion}

We introduced FIBO, a pretrained model that performs completely in-context Bayesian optimization with no surrogate fitting or acquisition function maximization.
We prove that by pretraining FIBO on functions drawn from a chosen prior we are approximating Thompson sampling, demonstrating that the method is a principled approach to Bayesian optimization.
Through experiments in both synthetic and real-world benchmarks, we show that FIBO's results are on par with other existing methods in the literature while being significantly faster, especially in the batch optimization setting.
In future work, we plan on scaling FIBO both in terms of pretraining data and model size, following a series of recent works that demonstrate the powers of scaling in many different domains.

\textbf{Limitations} \quad 
We propose a completely in-context method for BO that enjoys multiple orders of magnitude speed up compared to traditional BO methods. 
However, we focus on problems in lower dimensions since they are ubiquitous throughout science and engineering.
Moreover, our method does not solve the problem of the inherently long experiment time for computing \(f(\vx)\).
Nevertheless, FIBO is useful in batched, high-throughput, or multi-fidelity scenarios where the computation of \(f(\vx)\) is less of a problem.

{
\small
\bibliography{main.bib}

\begin{thebibliography}{50}
\providecommand{\natexlab}[1]{#1}
\providecommand{\url}[1]{\texttt{#1}}
\expandafter\ifx\csname urlstyle\endcsname\relax
  \providecommand{\doi}[1]{doi: #1}\else
  \providecommand{\doi}{doi: \begingroup \urlstyle{rm}\Url}\fi

\bibitem[Ament et~al.(2023{\natexlab{a}})Ament, Daulton, Eriksson, Balandat, and Bakshy]{ament_unexpected_2024}
Sebastian Ament, Samuel Daulton, David Eriksson, Maximilian Balandat, and Eytan Bakshy.
\newblock Unexpected improvements to expected improvement for {B}ayesian optimization.
\newblock In \emph{NeurIPS}, 2023{\natexlab{a}}.

\bibitem[Ament et~al.(2023{\natexlab{b}})Ament, Daulton, Eriksson, Balandat, and Bakshy]{wilson_maximizing_2018}
Sebastian Ament, Samuel Daulton, David Eriksson, Maximilian Balandat, and Eytan Bakshy.
\newblock Unexpected improvements to expected improvement for {B}ayesian optimization.
\newblock In \emph{NeurIPS}, 2023{\natexlab{b}}.

\bibitem[Auer(2003)]{Auer2003UsingCB}
Peter Auer.
\newblock Using confidence bounds for exploitation-exploration trade-offs.
\newblock \emph{JMLR}, 3\penalty0 (null), 2003.

\bibitem[Balandat et~al.(2020)Balandat, Karrer, Jiang, Daulton, Letham, Wilson, and Bakshy]{Balandat2019BoTorchPB}
Maximilian Balandat, Brian Karrer, Daniel Jiang, Samuel Daulton, Ben Letham, Andrew~G Wilson, and Eytan Bakshy.
\newblock Botorch: A framework for efficient monte-carlo bayesian optimization.
\newblock In \emph{NeurIPS}, 2020.

\bibitem[Brown et~al.(2020)Brown, Mann, Ryder, Subbiah, Kaplan, Dhariwal, Neelakantan, Shyam, Sastry, Askell, Agarwal, Herbert-Voss, Krueger, Henighan, Child, Ramesh, Ziegler, Wu, Winter, Hesse, Chen, Sigler, Litwin, Gray, Chess, Clark, Berner, McCandlish, Radford, Sutskever, and Amodei]{brown2020gpt3}
Tom Brown, Benjamin Mann, Nick Ryder, Melanie Subbiah, Jared~D Kaplan, Prafulla Dhariwal, Arvind Neelakantan, Pranav Shyam, Girish Sastry, Amanda Askell, Sandhini Agarwal, Ariel Herbert-Voss, Gretchen Krueger, Tom Henighan, Rewon Child, Aditya Ramesh, Daniel Ziegler, Jeffrey Wu, Clemens Winter, Chris Hesse, Mark Chen, Eric Sigler, Mateusz Litwin, Scott Gray, Benjamin Chess, Jack Clark, Christopher Berner, Sam McCandlish, Alec Radford, Ilya Sutskever, and Dario Amodei.
\newblock Language models are few-shot learners.
\newblock In \emph{NeurIPS}, 2020.

\bibitem[Byrd et~al.(1995)Byrd, Lu, Nocedal, and Zhu]{Byrd1995lbfgs}
Richard~H. Byrd, Peihuang Lu, Jorge Nocedal, and Ciyou Zhu.
\newblock A limited memory algorithm for bound constrained optimization.
\newblock \emph{SIAM Journal on Scientific Computing}, 16\penalty0 (5), 1995.

\bibitem[Cho et~al.(2014)Cho, van Merri{\"e}nboer, Gulcehre, Bahdanau, Bougares, Schwenk, and Bengio]{cho-etal-2014-learning}
Kyunghyun Cho, Bart van Merri{\"e}nboer, Caglar Gulcehre, Dzmitry Bahdanau, Fethi Bougares, Holger Schwenk, and Yoshua Bengio.
\newblock Learning phrase representations using {RNN} encoder{--}decoder for statistical machine translation.
\newblock In \emph{EMNLP}, 2014.

\bibitem[Durkan et~al.(2019)Durkan, Bekasov, Murray, and Papamakarios]{Durkan2019Spline}
Conor Durkan, Artur Bekasov, Iain Murray, and George Papamakarios.
\newblock Neural spline flows.
\newblock In \emph{NeurIPS}, 2019.

\bibitem[Feurer et~al.(2022)Feurer, Eggensperger, Falkner, Lindauer, and Hutter]{feurer2022auto}
Matthias Feurer, Katharina Eggensperger, Stefan Falkner, Marius Lindauer, and Frank Hutter.
\newblock Auto-sklearn 2.0: Hands-free automl via meta-learning.
\newblock \emph{JMLR}, 23\penalty0 (261), 2022.

\bibitem[Garnelo et~al.(2018)Garnelo, Rosenbaum, Maddison, Ramalho, Saxton, Shanahan, Teh, Rezende, and Eslami]{garnelo_conditional_2018}
Marta Garnelo, Dan Rosenbaum, Christopher Maddison, Tiago Ramalho, David Saxton, Murray Shanahan, Yee~Whye Teh, Danilo Rezende, and S.~M.~Ali Eslami.
\newblock Conditional neural processes.
\newblock In \emph{ICML}, 2018.

\bibitem[Garnett(2023)]{garnett_bayesoptbook_2023}
Roman Garnett.
\newblock \emph{{Bayesian Optimization}}.
\newblock Cambridge University Press, 2023.

\bibitem[Ginsbourger et~al.(2010)Ginsbourger, Le~Riche, and Carraro]{Ginsbourger2010}
David Ginsbourger, Rodolphe Le~Riche, and Laurent Carraro.
\newblock \emph{Kriging Is Well-Suited to Parallelize Optimization}.
\newblock Springer, 2010.

\bibitem[Gong et~al.(2024)Gong, Liu, Wu, and Wang]{gong2024textguidedmoleculegenerationdiffusion}
Haisong Gong, Qiang Liu, Shu Wu, and Liang Wang.
\newblock Text-guided molecule generation with diffusion language model.
\newblock \emph{Proceedings of the AAAI Conference on Artificial Intelligence}, 38\penalty0 (1), 2024.

\bibitem[Greenaway et~al.(2023)Greenaway, Jelfs, Spivey, and Yaliraki]{greenaway2023alchemist}
Rebecca~L Greenaway, Kim~E Jelfs, Alan~C Spivey, and Sophia~N Yaliraki.
\newblock From alchemist to ai chemist.
\newblock \emph{Nature Reviews Chemistry}, 7\penalty0 (8), 2023.

\bibitem[Griffiths and Hernández-Lobato(2020)]{griffiths2020constrained}
Ryan-Rhys Griffiths and José~Miguel Hernández-Lobato.
\newblock {C}onstrained {B}ayesian optimization for automatic chemical design using variational autoencoders.
\newblock \emph{Chem. Sci.}, 11, 2020.

\bibitem[H{\"a}se et~al.(2021)H{\"a}se, Aldeghi, Hickman, Roch, Christensen, Liles, Hein, and Aspuru-Guzik]{hase_olympus_2021}
Florian H{\"a}se, Matteo Aldeghi, Riley~J. Hickman, Lo{\"\i}c~M. Roch, Melodie Christensen, Elena Liles, Jason~E. Hein, and Al{\'a}n Aspuru-Guzik.
\newblock Olympus: a benchmarking framework for noisy optimization and experiment planning.
\newblock \emph{Machine Learning: Science and Technology}, 2\penalty0 (3), 2021.

\bibitem[Henr\'{a}ndez-Lobato et~al.(2014)Henr\'{a}ndez-Lobato, Hoffman, and Ghahramani]{hernandez2014entropysearch}
Jos\'{e}~Miguel Henr\'{a}ndez-Lobato, Matthew~W. Hoffman, and Zoubin Ghahramani.
\newblock Predictive entropy search for efficient global optimization of black-box functions.
\newblock In \emph{NeurIPS}, 2014.

\bibitem[Hern{\'a}ndez-Lobato et~al.(2017)Hern{\'a}ndez-Lobato, Requeima, Pyzer-Knapp, and Aspuru-Guzik]{hernandez-lobato_parallel_2017}
Jos{\'e}~Miguel Hern{\'a}ndez-Lobato, James Requeima, Edward~O. Pyzer-Knapp, and Al{\'a}n Aspuru-Guzik.
\newblock Parallel and distributed {T}hompson sampling for large-scale accelerated exploration of chemical space.
\newblock In \emph{ICML}, 2017.

\bibitem[Jiang et~al.(2020)Jiang, Chai, Gonzalez, and Garnett]{Jiang2020Binoculars}
Shali Jiang, Henry Chai, Javier Gonzalez, and Roman Garnett.
\newblock {BINOCULARS} for efficient, nonmyopic sequential experimental design.
\newblock In \emph{ICML}, 2020.

\bibitem[Jones et~al.(1998)Jones, Schonlau, and Welch]{Jones1998EfficientGO}
Donald~R. Jones, Matthias Schonlau, and William~J. Welch.
\newblock Efficient global optimization of expensive black-box functions.
\newblock \emph{Journal of Global Optimization}, 13, 1998.

\bibitem[Krishnamoorthy et~al.(2023)Krishnamoorthy, Mashkaria, and Grover]{krishnamoorthy_diffusion_2023}
Siddarth Krishnamoorthy, Satvik~Mehul Mashkaria, and Aditya Grover.
\newblock Diffusion models for black-box optimization.
\newblock In \emph{ICML}, 2023.

\bibitem[Kristiadi et~al.(2023)Kristiadi, Immer, Eschenhagen, and Fortuin]{kristiadi_promises_2023}
Agustinus Kristiadi, Alexander Immer, Runa Eschenhagen, and Vincent Fortuin.
\newblock Promises and pitfalls of the linearized {L}aplace in {B}ayesian optimization.
\newblock In \emph{Fifth Symposium on Advances in Approximate Bayesian Inference}, 2023.

\bibitem[Kristiadi et~al.(2024{\natexlab{a}})Kristiadi, Strieth-Kalthoff, Skreta, Poupart, Aspuru-Guzik, and Pleiss]{kristiadi_sober_2024}
Agustinus Kristiadi, Felix Strieth-Kalthoff, Marta Skreta, Pascal Poupart, Al\'{a}n Aspuru-Guzik, and Geoff Pleiss.
\newblock A sober look at {LLMs} for material discovery: {A}re they actually good for {B}ayesian optimization over molecules?
\newblock In \emph{ICML}, 2024{\natexlab{a}}.

\bibitem[Kristiadi et~al.(2024{\natexlab{b}})Kristiadi, Strieth-Kalthoff, Subramanian, Fortuin, Poupart, and Pleiss]{kristiadi2024asyncBO}
Agustinus Kristiadi, Felix Strieth-Kalthoff, Sriram~Ganapathi Subramanian, Vincent Fortuin, Pascal Poupart, and Geoff Pleiss.
\newblock How useful is intermittent, asynchronous expert feedback for {B}ayesian optimization?
\newblock In \emph{Sixth Symposium on Advances in Approximate Bayesian Inference-Non Archival Track}, 2024{\natexlab{b}}.

\bibitem[Kumar and Levine(2020)]{kumarlevine2020model}
Aviral Kumar and Sergey Levine.
\newblock Model inversion networks for model-based optimization.
\newblock In \emph{NeurIPS}, 2020.

\bibitem[Li et~al.(2024)Li, Rudner, and Wilson]{li_study_2023}
Yucen~Lily Li, Tim G.~J. Rudner, and Andrew~Gordon Wilson.
\newblock A study of {B}ayesian neural network surrogates for {B}ayesian optimization.
\newblock In \emph{ICLR}, 2024.

\bibitem[Mittal et~al.(2025)Mittal, Bracher, Lajoie, Jaini, and Brubaker]{mittal2025amortizedincontextbayesianposterior}
Sarthak Mittal, Niels~Leif Bracher, Guillaume Lajoie, Priyank Jaini, and Marcus Brubaker.
\newblock Amortized in-context {B}ayesian posterior estimation, 2025.

\bibitem[Mo{\v{c}}kus(1975)]{Mockus1975}
J.~Mo{\v{c}}kus.
\newblock On {B}ayesian methods for seeking the extremum.
\newblock In \emph{Optimization Techniques IFIP Technical Conference Novosibirsk, July 1--7, 1974}, 1975.

\bibitem[M{\"u}ller et~al.(2022)M{\"u}ller, Hollmann, Arango, Grabocka, and Hutter]{muller_transformers_2023}
Samuel M{\"u}ller, Noah Hollmann, Sebastian~Pineda Arango, Josif Grabocka, and Frank Hutter.
\newblock Transformers can do {B}ayesian inference.
\newblock In \emph{ICLR}, 2022.

\bibitem[M\"{u}ller et~al.(2023)M\"{u}ller, Feurer, Hollmann, and Hutter]{muller_pfns4bo_2023}
Samuel M\"{u}ller, Matthias Feurer, Noah Hollmann, and Frank Hutter.
\newblock {PFN}s4{BO}: {I}n-context learning for {B}ayesian optimization.
\newblock In \emph{ICML}, 2023.

\bibitem[Papamakarios et~al.(2021)Papamakarios, Nalisnick, Rezende, Mohamed, and Lakshminarayanan]{Papamakarios2019NormalizingFF}
George Papamakarios, Eric Nalisnick, Danilo~Jimenez Rezende, Shakir Mohamed, and Balaji Lakshminarayanan.
\newblock Normalizing flows for probabilistic modeling and inference.
\newblock \emph{Journal of Machine Learning Research}, 22\penalty0 (57), 2021.

\bibitem[Radev et~al.(2020)Radev, Mertens, Voss, Ardizzone, and K{\"o}the]{bayesflow_2020_original}
Stefan~T. Radev, Ulf~K. Mertens, Andreas Voss, Lynton Ardizzone, and Ullrich K{\"o}the.
\newblock {BayesFlow}: Learning complex stochastic models with invertible neural networks.
\newblock \emph{IEEE transactions on neural networks and learning systems}, 33\penalty0 (4):\penalty0 1452--1466, 2020.

\bibitem[Rahimi and Recht(2007)]{Rahimi2007RandomFF}
Ali Rahimi and Benjamin Recht.
\newblock Random features for large-scale kernel machines.
\newblock In \emph{NeurIPS}, 2007.

\bibitem[Ramesh et~al.(2021)Ramesh, Pavlov, Goh, Gray, Voss, Radford, Chen, and Sutskever]{Ramesh2021ZeroShotTG}
Aditya Ramesh, Mikhail Pavlov, Gabriel Goh, Scott Gray, Chelsea Voss, Alec Radford, Mark Chen, and Ilya Sutskever.
\newblock Zero-shot text-to-image generation.
\newblock In \emph{ICML}, 2021.

\bibitem[Rasmussen and Williams(2006)]{Rasmussen_Williams_gp_2006}
Carl~Edward Rasmussen and Christopher K.~I. Williams.
\newblock \emph{{Gaussian Processes for Machine Learning}}.
\newblock The MIT Press, 2006.

\bibitem[Reuter et~al.(2025)Reuter, Rudner, Fortuin, and Rügamer]{reuter2025transformerslearnbayesianinference}
Arik Reuter, Tim G.~J. Rudner, Vincent Fortuin, and David Rügamer.
\newblock Can transformers learn full {B}ayesian inference in context?
\newblock \emph{arXiv}, 2025.

\bibitem[Romero et~al.(2013)Romero, Krause, and Arnold]{romero2013navigating}
Philip~A Romero, Andreas Krause, and Frances~H Arnold.
\newblock Navigating the protein fitness landscape with {G}aussian processes.
\newblock \emph{Proceedings of the National Academy of Sciences}, 110\penalty0 (3), 2013.

\bibitem[Ruberg et~al.(2023)Ruberg, Beckers, Hemmings, Honig, Irony, LaVange, Lieberman, Mayne, and Moscicki]{ruberg2023bayesdrug}
Stephen~J Ruberg, Francois Beckers, Rob Hemmings, Peter Honig, Telba Irony, Lisa LaVange, Grazyna Lieberman, James Mayne, and Richard Moscicki.
\newblock Application of {B}ayesian approaches in drug development: starting a virtuous cycle.
\newblock \emph{Nature Reviews Drug Discovery}, 22\penalty0 (3), 2023.

\bibitem[Shahriari et~al.(2016)Shahriari, Swersky, Wang, Adams, and de~Freitas]{shahriari_taking_2016}
Bobak Shahriari, Kevin Swersky, Ziyu Wang, Ryan~P. Adams, and Nando de~Freitas.
\newblock Taking the human out of the loop: {A} review of {B}ayesian optimization.
\newblock \emph{Proceedings of the IEEE}, 2016.

\bibitem[Snoek et~al.(2012)Snoek, Larochelle, and Adams]{snoek2012boml}
Jasper Snoek, Hugo Larochelle, and Ryan~P Adams.
\newblock Practical {B}ayesian optimization of machine learning algorithms.
\newblock In \emph{NeurIPS}, 2012.

\bibitem[Stimper et~al.(2023)Stimper, Liu, Campbell, Berenz, Ryll, Schölkopf, and Hernández-Lobato]{Stimper2023}
Vincent Stimper, David Liu, Andrew Campbell, Vincent Berenz, Lukas Ryll, Bernhard Schölkopf, and José~Miguel Hernández-Lobato.
\newblock normflows: A {P}ytorch package for normalizing flows.
\newblock \emph{Journal of Open Source Software}, 8\penalty0 (86), 2023.

\bibitem[Thompson(1933)]{thompson1933sampling}
William~R Thompson.
\newblock On the likelihood that one unknown probability exceeds another in view of the evidence of two samples.
\newblock \emph{Biometrika}, 25\penalty0 (3-4), 1933.

\bibitem[Tom et~al.(2024)Tom, Schmid, Baird, Cao, Darvish, Hao, Lo, Pablo-García, Rajaonson, Skreta, Yoshikawa, Corapi, Akkoc, Strieth-Kalthoff, Seifrid, and Aspuru-Guzik]{self-driving-lab2024tom}
Gary Tom, Stefan~P. Schmid, Sterling~G. Baird, Yang Cao, Kourosh Darvish, Han Hao, Stanley Lo, Sergio Pablo-García, Ella~M. Rajaonson, Marta Skreta, Naruki Yoshikawa, Samantha Corapi, Gun~Deniz Akkoc, Felix Strieth-Kalthoff, Martin Seifrid, and Alán Aspuru-Guzik.
\newblock Self-driving laboratories for chemistry and materials science.
\newblock \emph{Chemical Reviews}, 124\penalty0 (16), 2024.

\bibitem[Vaswani et~al.(2017)Vaswani, Shazeer, Parmar, Uszkoreit, Jones, Gomez, Kaiser, and Polosukhin]{Vaswani2017AttentionIA}
Ashish Vaswani, Noam Shazeer, Niki Parmar, Jakob Uszkoreit, Llion Jones, Aidan~N Gomez, \L~ukasz Kaiser, and Illia Polosukhin.
\newblock Attention is all you need.
\newblock In \emph{Advances in Neural Information Processing Systems}, 2017.

\bibitem[Wilson et~al.(2018)Wilson, Hutter, and Deisenroth]{Wilson2018Maximizing}
James Wilson, Frank Hutter, and Marc Deisenroth.
\newblock Maximizing acquisition functions for {B}ayesian optimization.
\newblock In \emph{NeurIPS}, 2018.

\bibitem[Wu et~al.(2024)Wu, Kuang, Niu, Ma, and Yu]{wu_diff-bbo_2024}
Dongxia Wu, Nikki~Lijing Kuang, Ruijia Niu, Yian Ma, and Rose Yu.
\newblock Diff-{BBO}: {D}iffusion-based inverse modeling for black-box {O}ptimization.
\newblock In \emph{NeurIPS 2024 Workshop on Bayesian Decision-making and Uncertainty}, 2024.

\bibitem[Xiao et~al.(2024)Xiao, Wang, Zhou, Yuan, Xing, Yan, Wang, Huang, and Liu]{xiao2024omnigen}
Shitao Xiao, Yueze Wang, Junjie Zhou, Huaying Yuan, Xingrun Xing, Ruiran Yan, Shuting Wang, Tiejun Huang, and Zheng Liu.
\newblock Omnigen: Unified image generation.
\newblock \emph{arXiv preprint arXiv:2409.11340}, 2024.

\bibitem[Xie et~al.(2022)Xie, Raghunathan, Liang, and Ma]{Xie2021AnEO}
Sang~Michael Xie, Aditi Raghunathan, Percy Liang, and Tengyu Ma.
\newblock An explanation of in-context learning as implicit {B}ayesian inference.
\newblock In \emph{ICLR}, 2022.

\bibitem[Yun et~al.(2025)Yun, Om, Lee, Yun, and Park]{yun2025posteriorinferencediffusionmodels}
Taeyoung Yun, Kiyoung Om, Jaewoo Lee, Sujin Yun, and Jinkyoo Park.
\newblock Posterior inference with diffusion models for high-dimensional black-box optimization, 2025.

\bibitem[Zaheer et~al.(2017)Zaheer, Kottur, Ravanbakhsh, Poczos, Salakhutdinov, and Smola]{Zaheer2017Deepsets}
Manzil Zaheer, Satwik Kottur, Siamak Ravanbakhsh, Barnabas Poczos, Russ~R Salakhutdinov, and Alexander~J Smola.
\newblock Deep sets.
\newblock In \emph{NeurIPS}, 2017.

\end{thebibliography}
\bibliographystyle{plainnat}
}

% \newpage
% \input{checklist.tex}

\clearpage
\begin{appendices}
    \section{Data Generation Details}
\label{apdx:training}

As described in Section~\ref{sec:pretraining} we are interested in sampling pairs of the form \((\vx_*,\train)\) where \(\vx_* = \arg \max_{\vx \in \gX} f(x)\) and \(\train=\{(\vx,f(\vx)\}_{i=1}^N\) for some function \(f:\gX\rightarrow\R\) sampled from a prior distribution \(p(f)\).
Considering the factorization presented in the graphical model in Figure~\ref{fig:dag} it is clear that we can sample \(f,\vx_*,\train\) triples using ancestral sampling. We first sample a function from the chosen prior, then sampling from \(p(\vx_*|f)\) is performed by finding the optimal point of \(f\), and finally to sample from \(p(\train|f)\) we obtain samples of input points uniformly over the domain and pass them through the function.
At the end, we can simply discard the function to obtain \((\vx_*,\train)\) pairs.

Importantly, having \(\vx_* \perp\!\!\!\perp \train \mid f\) allows us to perform extensive data augmentation.
As the dataset is independent of the optimum for a given function, we can have vary dataset without having the need to recompute \(\vx_*\). 
We leverage this fact by collecting a large dataset when generating the data and dynamically subsampling from it during pretraining.
This has the effect of exponentially augmenting the amount of pretraining data while constructing datasets of variable sizes.

In practice, we want to have a prior over functions that are non-convex, as that is the underlying structure of many applications studied in Bayesian optimization.
Therefore, obtaining \(\vx_* = \arg \max_{\vx \in \gX} f(x)\) for a known sampled function \(f\) is itself a complex task.
To deal with this hard global optimization problem we make use of common practices in literature, such as optimization methods that use second order information and performing multiple restarts.
Although this can entail in higher computational cost, the data generation is performed offline a single time, and can be highly parallelizible across multiple independent processes.

    \section{Proof of Theorem~\ref{thm:ts}}
\label{apdx:proof}

Let us begin by applying the definition of expected value to the loss function presented in \eqref{eq:loss}:

\begin{equation}
\begin{split}
\gL(\theta) &= -\E_{\vx_*,\train \sim p(\vx_*,\train)}[\log p_\theta(\vx_*|\train)] \\
&= -\int_\train \int_{\vx_*} p(\vx_*,\train) \log p_\theta(\vx_*|\train)
\end{split}
\end{equation}

Now we factor \(p(\vx_*,\train)\) using the product rule and rearrange the integral accordingly:

\begin{equation}
\label{eq:step2}
\gL(\theta) = -\int_\train p(\train) \int_{\vx_*} p(\vx_*|\train) \log p_\theta(\vx_*|\train)
\end{equation}

The outer integral in \eqref{eq:step2} can be seen an expectation over \(\train\), while the inner integral (with the negative sign in the front) is the definition of the cross-entropy between \(p(\vx_*|\train)\) and \(p_\theta(\vx_*|\train)\). Thus, we can re-write the loss as

\begin{equation}
\gL(\theta) = \E_{\train \sim p(\train)}[H(p(\vx_*|\train), p_\theta(\vx_*|\train))]
\end{equation}

Making use of the identity that relates cross-entropy and KL divergence we get

\begin{equation}
\gL(\theta) = \E_{\train \sim p(\train)}[KL(p(\vx_*|\train) || p_\theta(\vx_*|\train))] + \E_{\train \sim p(\train)}[H(p(\vx_*|\train))]
\end{equation}

Dropping the second term, as it does not depend on \(\theta\), we arrive at the final result

\begin{equation}
\gL(\theta) = \E_{\train \sim p(\train)}[KL(p(\vx_*|\train) || p_\theta(\vx_*|\train))]
\end{equation}

demonstrating that by minimizing \(\gL(\theta)\) in \eqref{eq:loss} we are minimizing the expected KL divergence between the true Thompson sampling distribution and the one modelled by the deep generative model.

    \section{Implementation Details}
\label{apdx:implementation}

\paragraph{GP prior}
We sample \((\vx_*, \train)\) from an approximation of a GP prior as in~\citet{hernandez2014entropysearch}. A squared-exponential kernel \(k(\vx,\vx')=\gamma^2\exp(-0.5\sum_i(x_i-x'_i)/l_i^2)\) can be approximated using random Fourier features~\citep{Rahimi2007RandomFF} with the feature map \(\phi(\vx) = \sqrt{2\gamma^2/m}\cos(\mW\vx+\vb)\) where \(\mW\) and \(\vb\) are composed of \(m\) stacked samples from \(\gN(\vzero,\text{diag}(\vl^{-2}))\) and~\(\gU(0, 2\pi)\), respectively.
This enables the GP prior to be approximated by a linear model \(f(\vx) = \phi(\vx)\top\vbeta\), with \(\vbeta \sim \gN(\vzero,\mI)\).
Making use of this representation allows us to obtain \(\vx_* = \arg\max_{\vx \in \gX} f(\vx)\) and to sample \(\train\) trivially.

For our experiments we use hyperpriors \(l_i \sim \gU(0.01, 5)\) and \(\gamma^2 \sim \gU(1, 2)\).
In addition, we perform rejection sampling over the \((\vx_*, \train)\) pairs to ensure that the marginal distribution over the optima is uniform in the domain.
This last step is necessary to unsure that FIBO is not biased towards specific regions of the input space.

\paragraph{FIBO}
We pretrain one model per dimension using a dataset of 70,000 pairs sampled from the GP prior described above. The models are trained for 200 epochs, with batch size of 128, and using Adam optimizer with starting learning rate of \(10^{-4}\) and cosine scheduler. Training is performed in PyTorch on a single NVIDIA Quadro RTX 6000.

The architecture is composed of a Transformer encoder from PFNs4BO paper pretrained on the BNN prior\footnote{\url{https://github.com/automl/PFNs4BO}} and a neural spline flow generative head from the normflows library~\footnote{\url{https://github.com/VincentStimper/normalizing-flows}}. 
The output of the encoder is projected to \(\R^c\) with \(c=256\) and \(c=512\) for \(d=3\) and \(d=4\), respectively. Each flow block is comprised of 6 layers with 256 hidden units. For \(d=3\) we make use of 6 flow blocks, while for \(d=4\) we use 8 blocks.

\paragraph{Baselines}
For PFNs4BO we use the official implementation and weights provided by the authors, extending it to deal with batch suggestions using sequential simulation~\citep{Ginsbourger2010} and to deal with datasets of different sizes. All acquisition function optimization parameters are kept as they appear in the original paper~\citep{muller_pfns4bo_2023}.
We use the BoTorch~\citep{Balandat2019BoTorchPB} implementation of the GP model, optimizing the LogEI\citep{ament_unexpected_2024} with 10 restarts and 512 raw samples. 
Finally, for LLA we use their official BoTorch implementation\footnote{\url{https://github.com/wiseodd/laplace-bayesopt}} alongside the same acquisition function setup used for the GP.
FIBO, PFNs4BO, and GP-EI were executed on a NVIDIA Quadro RTX 6000, while LLA was executed on a NVIDIA A40 due to its higher memory requirements.

    \section{Olympus Details}
\label{apdx:olympus}

The Olympus toolkit~\citep{hase_olympus_2021} provides a wide variety of chemistry benchmarks for black-box optimization.
It contains machine learning based emulators that are used as oracles for the different tasks.
This allows fast objective function evaluation while still providing evaluation on real-world applications. The detailed information each tasks used in our experiments is provided in Table\ref{tab:olympus}. 
For all tasks we use the \texttt{BayesNeuralNet} emulator.

\begin{table}
    \centering
    \caption{Descriptions of the tasks from Olympus toolkit we use in our experiments.}
    \resizebox{\textwidth}{!}{\begin{tabular}{lcl}
    \toprule
         Name & Input Dimension & Description   \\
        \midrule
        Colors N9 & 3D & Minimize distance to target color mixing red, green and blue dye  \\
        Fullerness & 3D & Maximize the mole fraction of o-xylenyl adducts of Buckminsterfullerenes   \\
        Photo PCE10 & 4D & Minimize photo-degradation of organic solar cells with PCE10 \\
        Photo WF3 & 4D & Minimize photo-degradation of organic solar cells with WF3  \\
        Benzylation & 4D & Minimize yield of inpurity in N-benzylation reaction \\
        SnAr & 4D & Mininize e-factor for a nucleophilic aromatic substitution following the SnAr mechanism\\
        Suzuki & 4D & Maximize Suzuki coupling yield by adjusting reaction parameters   \\
    \bottomrule
    \end{tabular}}
    \label{tab:olympus}
\end{table}
\end{appendices}

\end{document}